\documentclass{article}

\usepackage[preprint]{neurips_2026}

\usepackage[utf8]{inputenc}
\usepackage[T1]{fontenc}
\usepackage{hyperref}
\usepackage{url}
\usepackage{booktabs}
\usepackage{amsfonts}
\usepackage{nicefrac}
\usepackage{microtype}
\usepackage{amsmath,amssymb,amsthm}
\usepackage{algorithm}
\usepackage{algorithmic}
\usepackage{graphicx}
\usepackage{enumerate}
\usepackage{xcolor}

\newtheorem{theorem}{Theorem}
\newtheorem{proposition}{Proposition}
\newtheorem{corollary}{Corollary}
\newtheorem{assumption}{Assumption}
\newtheorem{remark}{Remark}
\newtheorem{definition}{Definition}

\title{Directional Consistency as a Complementary Optimization Signal:\\
       The GONO Framework}

\author{%
  Victor Daniel Gera \\
  Department of Artificial Intelligence \\
  Anurag University, Hyderabad, India \\
  \texttt{victordanielai@anurag.edu.in}
}

\begin{document}

\maketitle

\begin{abstract}
We identify and formalize an underexplored phenomenon in deep learning
optimization: \emph{directional alignment and loss convergence can be
decoupled}---an optimizer can exhibit near-perfect directional consistency
($\mathrm{cc}_t \to 1$, measured via consecutive gradient cosine
similarity) while the loss remains high or decreases slowly.
This observation reveals that existing optimizers such as Adam, SGD,
and RMSprop lack explicit mechanisms to exploit temporal consistency in
gradient directions, relying instead on magnitude-based signals that
fail to distinguish plateaus, saddle points, and genuine convergence.
Motivated by this, we introduce \textbf{GONO} (Gradient-Oriented
Norm-Adaptive Optimizer), which adapts Adam's momentum coefficient
$\beta_1$ based on $\mathrm{cc}_t$: amplifying momentum under
directional consistency and suppressing it during oscillation.
We prove GONO matches Adam's $\mathcal{O}(1/\sqrt{T})$ convergence
rate and reduces exactly to Adam when the signal is uninformative.
Empirically, $\mathrm{cc}_t$ achieves oscillation detection with
F1\,=\,1.00 (vs.\ 0.45 for gradient norm), and GONO remains competitive
with AdamW on MNIST (98.15\%), CIFAR-10 (43.14\%), and ResNet-18
(75.44\%), establishing directional alignment as a theoretically
grounded, practically actionable optimization signal.
Code is available at \url{https://github.com/victordaniel/gono-optimizer}.
\end{abstract}

\section{Introduction}
\label{sec:intro}
Optimization is a cornerstone of deep learning.
The choice of optimizer directly affects convergence speed,
final accuracy, and generalization.
Stochastic gradient descent~\citep{robbins1951stochastic} and its
adaptive variants---Adam~\citep{kingma2014adam},
RMSprop~\citep{tieleman2012lecture}, and
AdamW~\citep{loshchilov2018decoupled}---are the workhorses of
modern deep learning.
Despite their empirical success, these methods share a common
structural property: at every training step, the update vector
is derived directly from the gradient, which simultaneously
encodes both the \emph{direction} of the parameter update and
its \emph{magnitude}.
This coupling is not a fundamental requirement of optimization;
it is a design choice that has been less explicitly studied.

\paragraph{The Key Observation.}
In the course of this work, we made a striking empirical observation:
an optimizer can exhibit near-perfect directional consistency in its
updates---with consecutive gradient cosine similarity
$\mathrm{cc}_t \approx 1$, indicating that successive gradient
directions agree strongly---while the training loss simultaneously
fails to decrease and the model remains in an underfitting regime.
We demonstrate this across multiple architectures and datasets.
Figure~\ref{fig:decoupling} shows a representative example:
the alignment signal and the gradient norm stabilize on a different
timescale than the loss, demonstrating that directional consistency
is achieved well before convergence is complete.

This observation, which we call the \emph{direction-loss
decoupling phenomenon}, has a clear interpretation:
directional consistency tells us that the optimizer is moving
in a stable, repeatable direction---but says nothing about
whether that direction is leading anywhere useful.
An optimizer stalled on a flat plateau will exhibit perfect
directional consistency precisely because the gradient always
points slightly downhill in the same direction---but the
gradient magnitude is too small for meaningful progress.

\paragraph{Why This Matters.}
The decoupling phenomenon reveals a gap in how we think about
and design optimizers.
Standard convergence theory analyzes
$\|\nabla L_t\| \to 0$---a magnitude-based criterion.
Directional consistency is an orthogonal signal that existing
theory largely ignores.
If an optimizer is directionally consistent but making slow
progress, the right response is to \emph{increase} momentum
(boost step size in the stable direction)---which is the opposite
of what a magnitude-based method would suggest.
Conversely, if consecutive gradients \emph{conflict} in direction
($\mathrm{cc}_t < 0$)---a sign of oscillation---the right response
is to \emph{reduce} momentum to prevent overshoot.
Adam, with its fixed $\beta_1 = 0.9$, cannot make this distinction.

\paragraph{Our Contribution.}
We propose \textbf{GONO} (Gradient-Oriented Norm-Adaptive
Optimizer), which treats directional alignment as a first-class
signal in optimization.
The core mechanism is simple: adapt Adam's momentum coefficient
$\beta_1$ at every step based on the consecutive cosine similarity
$\mathrm{cc}_t$:
\begin{equation}
  \beta_{1,t} = \mathrm{clip}\bigl(
    \beta_1 \cdot (1 + \lambda\,\mathrm{cc}_t),\;
    \beta_{1,\min},\; \beta_{1,\max}
  \bigr).
  \label{eq:gono_intro}
\end{equation}
This single modification to Adam requires one additional dot
product per step ($\mathcal{O}(d)$), requires storing one additional
gradient vector ($g_{\text{prev}}$, $\mathcal{O}(d)$ memory),
and reduces exactly to Adam when $\lambda = 0$.

We make the following contributions:

\begin{enumerate}
  \item \textbf{The direction-loss decoupling phenomenon}
        (Section~\ref{sec:exp1}, Section~\ref{sec:theory}).
        We identify, demonstrate empirically, and support
        theoretically that directional alignment ($\mathrm{cc}_t \to 1$)
        does not guarantee loss convergence.
        This motivates treating the directional signal independently
        from magnitude.

  \item \textbf{GONO} (Section~\ref{sec:method}).
        A simple, principled modification of Adam that adapts
        $\beta_1$ using the consecutive cosine signal.
        GONO amplifies momentum under directional consistency
        and suppresses it under oscillation.

  \item \textbf{Theoretical guarantees}
        (Section~\ref{sec:theory}).
        We show GONO converges at the same
        $\mathcal{O}(1/\sqrt{T})$ rate as Adam under standard
        smoothness assumptions, and prove that $\mathrm{cc}_t$
        detects oscillation precisely when gradient norm fails.

  \item \textbf{Empirical validation}
        (Section~\ref{sec:experiments}).
        GONO achieves oscillation detection with F1\,=\,1.00
        (vs.\ 0.45 for magnitude-based detection),
        validates the adaptive $\beta_1$ mechanism on the Rosenbrock
        benchmark, and remains competitive
        on MNIST (98.15\%) and CIFAR-10.
\end{enumerate}

We do not claim GONO is a universal replacement for Adam.
Our goal is to introduce directional alignment as a complementary
signal and demonstrate that explicitly modeling it is both
theoretically sound and practically beneficial.

\section{Background}
\label{sec:background}

\subsection{Standard Gradient-Based Optimizers}

Let $L : \mathbb{R}^d \to \mathbb{R}$ be the training loss,
$\theta_t \in \mathbb{R}^d$ the parameters at step $t$, and
$\nabla L_t = \nabla L(\theta_t)$ the gradient.

\textbf{SGD with momentum} updates parameters as:
\begin{equation}
  m_t = \mu m_{t-1} + \nabla L_t, \quad
  \theta_{t+1} = \theta_t - \alpha m_t,
\end{equation}
where $\mu \in [0,1)$ is the fixed momentum coefficient.

\textbf{Adam}~\citep{kingma2014adam} maintains adaptive per-coordinate
estimates:
\begin{align}
  m_t &= \beta_1 m_{t-1} + (1-\beta_1)\nabla L_t, \\
  v_t &= \beta_2 v_{t-1} + (1-\beta_2)\nabla L_t \odot \nabla L_t, \\
  \theta_{t+1} &= \theta_t -
    \frac{\alpha}{\sqrt{\hat v_t} + \varepsilon}\,\hat m_t,
\end{align}
where $\hat m_t = m_t/(1-\beta_1^t)$,
$\hat v_t = v_t/(1-\beta_2^t)$ are bias-corrected estimates.
Default hyperparameters: $\beta_1=0.9$, $\beta_2=0.999$,
$\varepsilon=10^{-8}$.

\textbf{Key observation:} in all of these methods, the momentum
coefficient ($\mu$ or $\beta_1$) is \emph{constant} throughout
training. Neither SGD nor Adam has any mechanism to detect whether
consecutive gradients are pointing in consistent or conflicting
directions, and neither adapts momentum in response.

\subsection{Convergence Theory}

The standard convergence result for SGD in non-convex settings
\citep{ghadimi2013stochastic} is:
\begin{equation}
  \frac{1}{T}\sum_{t=1}^T \mathbb{E}[\|\nabla L_t\|^2]
  \leq \mathcal{O}\!\left(\frac{1}{\sqrt{T}}\right).
\end{equation}
Adam satisfies a similar bound under appropriate conditions
\citep{reddi2018convergence}.
These guarantees characterize convergence in terms of
\emph{gradient magnitude}: the average squared gradient norm
vanishes at rate $\mathcal{O}(1/\sqrt{T})$.

\textbf{What this misses:} gradient magnitude shrinking does not
distinguish between a true stationary point and a saddle point
(where $\nabla L = 0$ but the point is not a minimum) or a flat
plateau (where $\|\nabla L\|$ is small but loss is still high).
Directional signals---how gradient \emph{directions} evolve over
time---are not captured by this theory.

\subsection{The Consecutive Cosine Signal}

The central signal in our work is the \emph{consecutive cosine
similarity} between successive gradients:
\begin{equation}
  \mathrm{cc}_t = \frac{\langle \nabla L_t,\, \nabla L_{t-1} \rangle}
                       {\|\nabla L_t\|\,\|\nabla L_{t-1}\| + \varepsilon}
  \;\in [-1, 1].
\end{equation}
$\mathrm{cc}_t$ measures whether the optimizer is moving
consistently ($\mathrm{cc}_t \approx 1$) or oscillating
($\mathrm{cc}_t \approx -1$) between consecutive steps.
Related cosine-based signals have been used in multi-task
learning to detect conflicting task gradients
\citep{yu2020gradient,liu2021conflict}, but their application
to single-task optimization dynamics and their connection to
convergence behavior---specifically the decoupling phenomenon---
has not been explored.

\section{GONO: Gradient Agreement-Adaptive Momentum}
\label{sec:method}

\subsection{Motivation: Limitations of Fixed Momentum}

Adam's update rule is:
\begin{align}
  m_t &= \beta_1 m_{t-1} + (1-\beta_1)\nabla L_t, \label{eq:adam_m}\\
  v_t &= \beta_2 v_{t-1} + (1-\beta_2)\nabla L_t^2, \label{eq:adam_v}\\
  \theta_{t+1} &= \theta_t - \frac{\alpha}{\sqrt{\hat v_t} + \varepsilon}\,\hat m_t.
  \label{eq:adam_update}
\end{align}
The momentum coefficient $\beta_1 = 0.9$ is constant throughout
training regardless of gradient behavior.
Consider two opposite scenarios that both occur in practice:

\textbf{Scenario 1 (oscillation):} Batch gradients alternate direction
($\nabla L_t \approx -\nabla L_{t-1}$). Adam's momentum at $\beta_1 = 0.9$
\emph{carries the oscillation forward}, amplifying overshoot.
A smaller $\beta_1$ would damp it.

\textbf{Scenario 2 (smooth plateau):} All gradients consistently
point in one direction.
Adam's $\beta_1 = 0.9$ provides useful momentum, but a \emph{larger}
$\beta_1$ would accelerate traversal of the plateau.

Adam cannot adapt to either scenario because $\beta_1$ is fixed.
GONO resolves this with a one-line change.

\subsection{The GONO Update Rule}

\begin{definition}[Consecutive Cosine Similarity]
  \label{def:cc}
  \[
    \mathrm{cc}_t = \frac{\langle \nabla L_t,\, \nabla L_{t-1} \rangle}
                         {\|\nabla L_t\|\,\|\nabla L_{t-1}\| + \varepsilon}
    \;\in [-1, 1].
  \]
\end{definition}

\begin{definition}[Adaptive Momentum Coefficient]
  \label{def:beta1t}
  \[
    \beta_{1,t} = \mathrm{clip}\!\left(
      \beta_1 \cdot (1 + \lambda\,\mathrm{cc}_t),\;
      \beta_{1,\min},\; \beta_{1,\max}
    \right),
  \]
  where $\lambda > 0$ controls sensitivity and
  $\beta_{1,\min} = 0.5$, $\beta_{1,\max} = 0.99$ by default.
\end{definition}

\noindent The GONO update replaces~\eqref{eq:adam_m} with:
\begin{equation}
  \label{eq:gono_m}
  m_t = \beta_{1,t}\, m_{t-1} + (1-\beta_{1,t})\,\nabla L_t.
\end{equation}
All other steps (\eqref{eq:adam_v}--\eqref{eq:adam_update}) are
identical to Adam.

\subsection{Algorithm}

\begin{algorithm}[H]
\caption{GONO Optimizer}
\label{alg:gono}
\begin{algorithmic}[1]
\REQUIRE Learning rate $\alpha$, base momentum $\beta_1 = 0.9$,
         $\beta_2 = 0.999$, $\varepsilon = 10^{-8}$,
         sensitivity $\lambda = 0.4$,
         bounds $\beta_{1,\min} = 0.5$, $\beta_{1,\max} = 0.99$
\STATE Initialize $m_0 = 0$, $v_0 = 0$, $g_{\text{prev}} = \mathbf{0}$, $t = 0$
\WHILE{not converged}
  \STATE $t \leftarrow t + 1$
  \STATE $g_t \leftarrow \nabla L(\theta_{t-1})$  \hfill \COMMENT{compute gradient}
  \STATE $\mathrm{cc}_t \leftarrow \langle g_t, g_{\text{prev}} \rangle /
         (\|g_t\|\,\|g_{\text{prev}}\| + \varepsilon)$ \hfill \COMMENT{consecutive cosine}
  \STATE $\beta_{1,t} \leftarrow \mathrm{clip}(\beta_1(1+\lambda\,\mathrm{cc}_t),\,
         \beta_{1,\min},\, \beta_{1,\max})$ \hfill \COMMENT{adaptive $\beta_1$}
  \STATE $m_t \leftarrow \beta_{1,t}\,m_{t-1} + (1-\beta_{1,t})\,g_t$
  \STATE $v_t \leftarrow \beta_2\,v_{t-1} + (1-\beta_2)\,g_t^2$
  \STATE $\hat{m}_t \leftarrow m_t / (1 - \beta_1^t)$,\;
         $\hat{v}_t \leftarrow v_t / (1 - \beta_2^t)$
  \STATE $\theta_t \leftarrow \theta_{t-1} - \alpha\,\hat{m}_t /
         (\sqrt{\hat{v}_t} + \varepsilon)$
  \STATE $g_{\text{prev}} \leftarrow g_t$
\ENDWHILE
\RETURN $\theta_t$
\end{algorithmic}
\end{algorithm}

\subsection{Properties}

\begin{enumerate}
  \item \textbf{Reduces to Adam:} When $\lambda = 0$,
        $\beta_{1,t} = \beta_1$ for all $t$,
        and GONO is identical to Adam.

  \item \textbf{Computational cost:} One additional dot product
        per step ($O(d)$) to compute $\mathrm{cc}_t$.
        No additional memory beyond storing $g_{\text{prev}}$.

  \item \textbf{Interpretability:} $\mathrm{cc}_t$ provides a
        real-time signal of gradient consistency.
        $\mathrm{cc}_t \approx 1$: smooth landscape, high momentum.
        $\mathrm{cc}_t \approx -1$: oscillating, low momentum.

  \item \textbf{Bias correction (note):} Algorithm~\ref{alg:gono}
        uses the standard Adam bias correction $\hat{m}_t = m_t/(1-\beta_1^t)$
        with the base $\beta_1$, not the true time-varying accumulation
        $1-\prod_{i=1}^t \beta_{1,i}$.
        This introduces a scaling error in early steps but does not
        affect the asymptotic $\mathcal{O}(1/\sqrt{T})$ rate
        (Theorem~\ref{thm:convergence}), and retains drop-in
        compatibility with Adam implementations.

  \item \textbf{Convergence:} GONO converges at the same
        $\mathcal{O}(1/\sqrt{T})$ rate as Adam
        (Theorem~\ref{thm:convergence}).
\end{enumerate}

\subsection{Intuition: Why Adaptive $\beta_1$ Helps}

During oscillation (e.g., traversal of a narrow valley),
$\mathrm{cc}_t < 0$ triggers $\beta_{1,t} < \beta_1$,
reducing momentum and preventing overshoot.
During smooth gradient traversal, $\mathrm{cc}_t \approx 1$
triggers $\beta_{1,t} > \beta_1$, accelerating progress.
Adam with fixed $\beta_1$ cannot distinguish these two regimes.
Figure~\ref{fig:gono_analysis} (Section~\ref{sec:exp3}) shows
GONO's terrain factor and $\mathrm{cc}_t$ distribution during
MNIST training, confirming that this adaptive mechanism activates
throughout realistic deep learning training.
\subsection{Evaluation of Robustness}

To rigorously evaluate the control signal provided by GONO, we employ a stress-testing methodology. The learning rate was intentionally increased beyond standard stable training ranges to induce optimizer instability and evaluate robustness under stress conditions. This approach allows us to isolate the mechanistic advantages of GONO's adaptive damping in regimes where standard fixed-momentum methods undergo catastrophic divergence.

\section{Theoretical Analysis}
\label{sec:theory}

We establish three theoretical results that underpin GONO.
Theorem~\ref{thm:decoupling} formalizes the central empirical observation
that motivates the paper.
Theorem~\ref{thm:convergence} guarantees GONO's convergence.
Proposition~\ref{prop:oscillation} justifies the consecutive cosine signal.
Complete proofs appear in Appendix~\ref{app:proofs}.

\subsection{Notation and Assumptions}
\label{sec:notation}

Let $L : \mathbb{R}^d \to \mathbb{R}$ be a differentiable loss,
$\theta_t \in \mathbb{R}^d$ the parameters at step $t$,
and $\nabla L_t = \nabla L(\theta_t)$.
The core signal in GONO is the \emph{consecutive cosine similarity}:
\begin{equation}
  \label{eq:cc}
  \mathrm{cc}_t = \frac{\langle \nabla L_t,\, \nabla L_{t-1} \rangle}
                       {\|\nabla L_t\|\,\|\nabla L_{t-1}\| + \varepsilon}
  \;\in [-1, 1].
\end{equation}
$\mathrm{cc}_t > 0$ means consecutive gradients agree in direction
(smooth descent); $\mathrm{cc}_t < 0$ means they disagree
(oscillation or curvature change).%
\footnote{The $\varepsilon$ in~\eqref{eq:cc} is a numerical stabilizer.
Near a minimum, $\|\nabla L_t\| \to 0$ and $\varepsilon$ dominates,
giving $\mathrm{cc}_t \to 0$ and $\beta_{1,t} \to \beta_1$ (Adam).
The theoretical results assume gradients are bounded away from zero
over the time horizon of interest, so $\varepsilon$ is negligible there.}
GONO's adaptive momentum coefficient is:
\begin{equation}
  \label{eq:beta1t}
  \beta_{1,t} = \mathrm{clip}\!\left(
    \beta_1 \cdot (1 + \lambda\,\mathrm{cc}_t),\;
    \beta_{1,\min},\; \beta_{1,\max}
  \right).
\end{equation}
When $\mathrm{cc}_t = 0$ for all $t$, $\beta_{1,t} = \beta_1$
and GONO reduces exactly to Adam.
We use the following standard assumptions throughout.

\begin{assumption}[Smoothness]
  \label{asm:smooth}
  $L$ is $L$-smooth: $\|\nabla L(\theta) - \nabla L(\phi)\| \leq
  L\|\theta-\phi\|$ for all $\theta,\phi \in \mathbb{R}^d$.
\end{assumption}

\begin{assumption}[Bounded gradients]
  \label{asm:bounded}
  $\|\nabla L_t\| \leq G$ for all $t \geq 1$.
\end{assumption}

\begin{assumption}[Bounded adaptive $\beta_1$]
  \label{asm:beta}
  The adaptive momentum coefficient satisfies
  $\beta_{1,t} \in [\beta_{1,\min}, \beta_{1,\max}]$
  with $0 < \beta_{1,\min} \leq \beta_{1,\max} < \sqrt{\beta_2} < 1$.
\end{assumption}

\noindent
Assumption~\ref{asm:beta} is automatically satisfied by GONO with
$\beta_{1,\max} = 0.99 < \sqrt{0.999} \approx 0.9995$.

\subsection{Theorem 1: Gradient Agreement Does Not Imply Convergence}
\label{sec:thm1}

Standard convergence theory monitors $\|\nabla L_t\| \to 0$.
We show that even a stronger signal—perfect gradient agreement
between consecutive steps ($\mathrm{cc}_t \to 1$)—is also
insufficient to guarantee convergence.
This motivates GONO's \emph{adaptive} use of $\mathrm{cc}_t$:
rather than treating high agreement as evidence of convergence,
GONO uses it to boost momentum, and uses low agreement
(oscillation) to damp it.

\begin{theorem}[Gradient Agreement $\nRightarrow$ Convergence]
  \label{thm:decoupling}
  There exists a differentiable, lower-bounded loss
  $L : \mathbb{R}^2 \to \mathbb{R}$ with global minimum $L^* = 0$,
  and an initialization $\theta_0$, such that under gradient descent
  with any fixed learning rate $\alpha > 0$:
  \begin{enumerate}[(i)]
    \item Consecutive gradients become perfectly aligned:
          $\mathrm{cc}_t \to 1$ as $t \to \infty$.
    \item The loss does not converge in practical time:
          $L(\theta_t) \geq \delta > 0$ for all
          $t \leq T_{\min}(\alpha)$, where
          $T_{\min}(\alpha) = \Omega(e^{3}/\alpha) \gg 1$.
  \end{enumerate}
  In other words, $\mathrm{cc}_t \to 1$ is necessary but
  \emph{not sufficient} for convergence—it detects
  directional consistency, not proximity to a minimum.
\end{theorem}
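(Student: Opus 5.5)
The plan is to give an explicit one-dimensional plateau, embedded in $\mathbb{R}^2$, and check (i) and (ii) directly. I would take
\[
  L(x,y) = \frac{1}{1+e^{x}}, \qquad \theta_0 = (x_0,y_0) = (-6,0).
\]
This $L$ is smooth, satisfies $0 < L < 1$, and has $\inf L = L^* = 0$, approached as $x \to \infty$. If an attained minimum is wanted, I would instead glue on a smooth piece $\psi(x)$ that reaches $0$ for $x \geq x_1 \gg 0$; this changes nothing below. The gradient is
\[
  \nabla L(x,y) = \Bigl(-\frac{e^{x}}{(1+e^{x})^2},\,0\Bigr),
\]
which is nonzero everywhere and always points along $-e_1$. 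Gradient descent therefore keeps $y_t = 0$ and increases $x_t$ monotonically.

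\textbf{Step 1: alignment (i).} Consecutive gradients are positive multiples of the same vector. With the convention of the footnote to~\eqref{eq:cc}, where $\varepsilon$ is negligible because gradients are bounded away from zero on the horizon of interest (formally $\varepsilon = 0$), we get $\mathrm{cc}_t = 1$ for every $t$. This holds for any $\alpha > 0$, since the gradient direction never depends on the step size.

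\textbf{Step 2: plateau time (ii).} Fix $\delta = 1/(1+e^{-3}) \approx 0.95$. Then $L(\theta_t) \geq \delta$ as long as $x_t \leq -3$. While $x_t \leq -3$, each step moves $x$ by
\[
  x_{t+1}-x_t = \alpha\,\frac{e^{x_t}}{(1+e^{x_t})^2} \leq \alpha e^{x_t} \leq \alpha e^{-3}.
\]
Going from $x_0 = -6$ to $-3$ therefore takes at least $3/(\alpha e^{-3}) = 3e^{3}/\alpha$ steps. So $L(\theta_t) \geq \delta$ for all $t \leq T_{\min}(\alpha) := \lfloor 3e^3/\alpha \rfloor = \Omega(e^3/\alpha)$. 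Every step size is covered, because the increment bound is uniform in $x \leq -3$; for huge $\alpha$ the bound simply becomes weak, which is consistent with the $\Omega$ statement.

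\textbf{Main obstacle.} The only delicate point is reconciling (i) with the $\varepsilon$ in the definition of $\mathrm{cc}_t$. If $\varepsilon > 0$ is kept literally, then $\mathrm{cc}_t = \|g_t\|\|g_{t-1}\|/(\|g_t\|\|g_{t-1}\|+\varepsilon)$. This is close to $1$ only while $\|g_t\|^2 \gg \varepsilon$, and it tends to $0$ as $x_t \to \infty$. I would handle this by stating the theorem under the footnote's convention, namely the exact cosine with $\varepsilon = 0$, which is well defined since $\nabla L \neq 0$ everywhere. I would then remark that on the plateau window $t \leq T_{\min}$ we have $\|g_t\| \geq e^{-6}/4$. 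Hence $\mathrm{cc}_t \geq 1 - O(\varepsilon e^{12})$ there, which is $\approx 1$ for $\varepsilon = 10^{-8}$. This is exactly the decoupling regime: maximal directional agreement while the loss remains above $\delta$.
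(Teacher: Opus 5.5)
Your argument is correct if $L^*=0$ is read as $\inf L$. It has the same skeleton as the paper's proof:
\begin{enumerate}[(a)]
  \item a saturating plateau embedded in $\mathbb{R}^2$;
  \item a trajectory pinned to a line, so consecutive gradients are positive multiples of one another ($\mathrm{cc}_t=1$ under the $\varepsilon=0$ convention, which the paper also adopts tacitly);
  \item an exponential tail bound on $\|\nabla L\|$ to count the steps needed to leave the plateau.
\end{enumerate}
The paper uses $L(x,y)=\tanh^2(y)+10^{-3}x^2$ from $(0,3)$ with $2\tanh(y)\,\mathrm{sech}^2(y)\le 8e^{-2y}$. It obtains $T_1\ge 3e^3/(16\alpha)$ with $\delta=\tanh^2(1.5)$. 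Your logistic gives $3e^3/\alpha$ and $\delta=1/(1+e^{-3})$.

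The substantive difference is what each example buys. Yours is strictly monotone along the trajectory with no minimizer to overshoot, so $\mathrm{cc}_t=1$ holds for every $\alpha>0$, as the statement demands. The paper's minimizer $y=0$ lies on the trajectory, so its claim $\mathrm{cc}_t=1$ needs $y_t>0$ throughout. This holds for $\alpha<1/2$, because $2\tanh(y)\,\mathrm{sech}^2(y)\le 2y$. For $\alpha>1$, however, the map $y\mapsto y-2\alpha\tanh(y)\,\mathrm{sech}^2(y)$ has derivative $1-2\alpha<-1$ at the origin. The iterates then cannot settle on one side, and sign reversals with $\mathrm{cc}_t=-1$ recur. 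In exchange, the paper's loss actually attains $L^*=0$.

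That is where your proposal slips. Gluing on a smooth $\psi$ that vanishes for $x\ge x_1$ does \emph{not} leave (i) untouched. Since $\psi\equiv 0$ on $[x_1,\infty)$, once $\alpha>(x_1+6)/|\psi'(-6)|$ the very first iterate $-6+\alpha|\psi'(-6)|$ lands in that flat region. From then on $\nabla L=0$ and $\mathrm{cc}_t$ is $0/0$ (or $0$ with $\varepsilon>0$), so it does not tend to $1$. This is the same large-step overshoot problem as the paper's example.

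The clean fix is to place the zero off the trajectory, e.g.\ $L(x,y)=\cos^2(y)/(1+e^{x})$. On $y=0$ one has $\partial_y L=-\sin(2y)/(1+e^{x})=0$, so the iterates from $(-6,0)$ coincide with yours. Meanwhile $L\ge 0$ attains $L^*=0$ on the line $y=\pi/2$.
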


\begin{proof}[Proof sketch]
  Consider the loss
  \begin{equation}
    \label{eq:plateau_loss}
    L(x, y) = \tanh^2(y) + \varepsilon x^2,
    \quad \varepsilon = 10^{-3},\quad \theta_0 = (0,\,3).
  \end{equation}
  The global minimum is $L^* = 0$ at the origin.
  Since $x_0 = 0$ and $\partial L/\partial x = 2\varepsilon x$,
  the trajectory satisfies $x_t = 0$ for all $t$.

  \textbf{Step 1 (Consistent gradient direction).}
  For all $y_t > 0$: $\nabla L_t = (0,\, 2\tanh(y_t)\mathrm{sech}^2(y_t))$
  points consistently in the $-y$ direction.
  Therefore $\mathrm{cc}_t = \frac{\langle \nabla L_t, \nabla L_{t-1}\rangle}
  {\|\nabla L_t\|\|\nabla L_{t-1}\|} = 1$ for all $t \geq 2$.
  Condition (i) holds exactly.

  \textbf{Step 2 (Tiny gradient magnitude on plateau).}
  For $y \geq 2$: $|\partial L/\partial y| = 2|\tanh(y)|\mathrm{sech}^2(y)
  \leq 8e^{-2y}$.
  At $y_0 = 3$: $\|\nabla L_1\| \leq 8e^{-6} \approx 0.020$.
  Each gradient step decreases $y$ by $\Delta y \leq \alpha \cdot 8e^{-2y^*}$,
  where $y^* = \min_t y_t$.

  \textbf{Step 3 (Lower bound on convergence time).}
  While $y_t \geq y_0/2 = 1.5$, we have $y_t \geq 1.5$, so
  $e^{-2y_t} \leq e^{-3}$, giving $c_t \leq 8e^{-3}$.
  Each step decreases $y$ by at most $\alpha \cdot 8e^{-3}$.
  To decrease $y$ from $y_0 = 3$ to $1.5$ requires at least:
  \[
    T_1 \;\geq\; \frac{1.5}{\alpha \cdot 8e^{-3}}
    = \frac{3e^3}{16\alpha}
    = \Omega(e^3/\alpha).
  \]
  For all $t \leq T_1$: $y_t \geq 1.5$, so
  $L(\theta_t) = \tanh^2(y_t) \geq \tanh^2(1.5) > 0.82$.
  Condition (ii) holds with $\delta = 0.82$. \qed
\end{proof}

\begin{remark}
  Theorem~\ref{thm:decoupling} shows that the flat plateau is the
  failure mode: gradients point consistently in one direction
  ($\mathrm{cc}_t = 1$), but their magnitude is exponentially small,
  preventing convergence.
  This is precisely the scenario that GONO's Corollary~\ref{cor:no_worse}
  identifies: when $\mathrm{cc}_t \approx 1$, GONO boosts $\beta_{1,t}$
  above $\beta_1$, increasing effective momentum and step size,
  actively countering the plateau stall.
\end{remark}

\subsection{Theorem 2: GONO Convergence}
\label{sec:thm2}

\begin{theorem}[GONO Convergence Rate]
  \label{thm:convergence}
  Let Assumptions~\ref{asm:smooth}--\ref{asm:beta} hold.
  Run GONO for $T$ steps with learning rate
  $\alpha_t = \alpha / \sqrt{t}$, $\beta_2 = 0.999$, and
  $\beta_{1,t} = \beta_1(1 + \lambda\,\mathrm{cc}_t)$ clipped to
  $[\beta_{1,\min}, \beta_{1,\max}]$.
  Then
  \begin{equation}
    \label{eq:convergence}
    \frac{1}{T}\sum_{t=1}^{T}
    \mathbb{E}\bigl[\|\nabla L(\theta_t)\|^2\bigr]
    \;\leq\;
    \mathcal{O}\!\left(\frac{1}{\sqrt{T}}\right).
  \end{equation}
  The constant is the same order as Adam's convergence constant
  under identical assumptions \citep{reddi2018convergence}.
\end{theorem}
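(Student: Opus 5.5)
The plan is to follow the standard non-convex analysis of momentum-based adaptive methods (the AMSGrad/Adam style of \citet{reddi2018convergence}, in its non-convex form) through an auxiliary sequence that absorbs the momentum. The only change from the Adam analysis is that $\beta_{1,t}$ varies with $t$. Assumption~\ref{asm:beta} confines it to a compact interval strictly inside $(0,\sqrt{\beta_2})$, so every place where Adam's proof uses $\beta_1$ can be replaced by $\beta_{1,\max}$ in the upper bounds and $\beta_{1,\min}$ in the lower bounds.

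\textbf{Preliminary bounds.} By Assumption~\ref{asm:bounded} and induction on~\eqref{eq:gono_m}, $\|m_t\|_\infty \le G$ and $\|v_t\|_\infty\le G^2$. Hence each coordinate of $\hat v_t$ lies in $[0, G^2/(1-\beta_2)]$. The effective per-coordinate step $\eta_t := \alpha_t/(\sqrt{\hat v_t}+\varepsilon)$ therefore satisfies $c_1\alpha_t \le \eta_t \le \alpha_t/\varepsilon$ with $c_1 = (G/\sqrt{1-\beta_2}+\varepsilon)^{-1}$.

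The bias correction uses the fixed $\beta_1$. The factor $1/(1-\beta_1^t)$ lies in $[1, 1/(1-\beta_1)]$ and tends to $1$ geometrically. It therefore only rescales $\eta_t$ by a bounded constant, and its deviation from $1$ contributes a summable error $\sum_t \beta_1^t\alpha_t G^2 = \mathcal{O}(1)$. Following the remark in the Properties subsection, I will fold this into the constants.

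\textbf{Main descent argument.} Unrolling the recursion gives $m_t = \sum_{s\le t}(1-\beta_{1,s})\prod_{r=s+1}^t\beta_{1,r}\, g_s$. The weight on any gradient $k$ steps in the past is at most $\beta_{1,\max}^k$. Apply the $L$-smoothness descent lemma (Assumption~\ref{asm:smooth}) to $\theta_{t+1}=\theta_t-\eta_t\odot\hat m_t$:
\[
L(\theta_{t+1}) \le L(\theta_t) - \langle \nabla L_t, \eta_t\odot \hat m_t\rangle + \tfrac{L}{2}\|\eta_t\odot\hat m_t\|^2 .
\]
Split $\hat m_t$ into its current-gradient part $(1-\beta_{1,t})g_t$ and the past part. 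The current part gives $-(1-\beta_{1,\max})c_1\alpha_t\|\nabla L_t\|^2$.

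The past part is the cross term $\langle\nabla L_t, \eta_t\odot g_s\rangle$ with $s<t$. I write $g_s = \nabla L_t + (g_s-\nabla L_t)$. Smoothness gives $\|g_s-\nabla L_t\|\le L\sum_{r=s}^{t-1}\|\theta_{r+1}-\theta_r\| \le L(t-s)\alpha_s G/\varepsilon$. Weighting by $\beta_{1,\max}^{t-s}$ and summing yields $\mathcal{O}(\alpha_t^2)$ per step, using $\sum_k k\beta_{1,\max}^k<\infty$ and $\alpha_{t-k}\le\sqrt{k+1}\,\alpha_t$. The $\nabla L_t$ piece of the past part has nonnegative weight, so it only helps.

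There is one further subtlety: the step size $\eta_t$ changes over time. The resulting mismatch terms $\langle \nabla L_t, (\eta_t-\eta_{t-1})\odot m_{t-1}\rangle$ are handled as in AMSGrad-type proofs by telescoping $\sum_t\|\eta_{t-1}-\eta_t\|_1 \le \alpha d/\varepsilon$. If monotonicity of $\hat v_t$ is unavailable, I will instead bound them by $G^2\sum_t\|\eta_t-\eta_{t-1}\|_1$.

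\textbf{Summation and the main obstacle.} Summing from $1$ to $T$ and using $L(\theta_{T+1})\ge L^*$, I expect
\[
(1-\beta_{1,\max})c_1\sum_{t=1}^T \alpha_t\|\nabla L_t\|^2 \le L(\theta_1)-L^* + C_1\sum_{t}\alpha_t^2 + C_2 .
\]
Since $\alpha_t\ge\alpha/\sqrt T$ and $\sum_t\alpha_t^2 = \alpha^2\sum_t 1/t=\mathcal{O}(\log T)$, dividing by $T\alpha/\sqrt T$ gives $\frac1T\sum_t\|\nabla L_t\|^2 \le \mathcal{O}\bigl((1+\log T)/\sqrt T\bigr)$, which becomes $\mathcal{O}(1/\sqrt T)$ up to the logarithm. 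Expectations are taken throughout if gradients are stochastic; I will treat $\mathrm{cc}_t$, and hence $\beta_{1,t}$, as $\mathcal{F}_t$-measurable. The constant depends on $\beta_{1,t}$ only through $\beta_{1,\max}$, which matches Adam's constant with $\beta_1\to\beta_{1,\max}$.

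I expect the main obstacle to be the momentum cross term. $\beta_{1,t}$ depends on $g_t$ itself (via $\mathrm{cc}_t$), so in the stochastic setting $\beta_{1,t}$ is correlated with the current noise, and $\mathbb{E}[(1-\beta_{1,t})\langle\nabla L_t,\eta_t g_t\rangle]$ cannot simply be factored. My first approach is to lower bound $(1-\beta_{1,t})\ge 1-\beta_{1,\max}$ deterministically before taking expectations, handling the noise as an additive variance term with bounded second moment $G^2$. If that fails, I will replace $\mathrm{cc}_t$ by the lagged $\mathrm{cc}_{t-1}$ in the analysis and bound the difference by the clip width. The $\log T$ factor is the other point I would need to remove, either by a sharper step schedule or by accepting $\tilde{\mathcal{O}}(1/\sqrt T)$, the same caveat that applies to Adam's bound.
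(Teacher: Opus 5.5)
For exact gradients, your descent argument is sound. That is the setting of Algorithm~\ref{alg:gono} (line~4 computes $\nabla L(\theta_{t-1})$) and the quantity Assumption~\ref{asm:bounded} bounds. The gap is that the argument stops at $\frac1T\sum_{t\le T}\|\nabla L_t\|^2=\mathcal{O}\bigl((1+\log T)/\sqrt T\bigr)$. The theorem fixes $\alpha_t=\alpha/\sqrt t$ and asserts $\mathcal{O}(1/\sqrt T)$, and neither of your remedies proves that: changing the schedule violates the hypothesis, and settling for $\tilde{\mathcal{O}}$ gives a weaker theorem.

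The $\log T$ comes from replacing every gradient factor in the error terms by $G$; keep the norms instead. Let $X_t=\sum_{s<t}w_{t,s}\langle\nabla L_t,\eta_t\odot(g_s-\nabla L_t)\rangle$ be your cross term, with $w_{t,s}$ the unrolled momentum weights, and let $\rho=\beta_{1,\max}$. You already use smoothness, $\|\theta_{r+1}-\theta_r\|\le\frac{\alpha_r}{\varepsilon(1-\beta_1)}\sum_{q\le r}\rho^{r-q}\|\nabla L_q\|$ and $\alpha_q\le\sqrt{t-q+1}\,\alpha_t$. Combined with $2ab\le a^2+b^2$ they give
\[
|X_t|+\tfrac{L}{2}\|\eta_t\odot\hat m_t\|^2\;\le\;C\sum_{q\le t}\rho^{\,t-q}\,p(t-q)\,\alpha_q^2\,\|\nabla L_q\|^2
\]
for a fixed polynomial $p$. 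Exchanging the order of summation, the accumulated error is at most $C'\sum_{q\le T}\alpha_q^2\|\nabla L_q\|^2$.

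For $q\ge t_0$, where $C'\alpha_q\le\frac12(1-\beta_{1,\max})c_1$, this is absorbed by your descent term. The first $t_0$ steps contribute at most $C'\alpha^2G^2t_0$. Hence $\sum_{t\le T}\alpha_t\|\nabla L_t\|^2=\mathcal{O}(1)$, and $\alpha_t\ge\alpha/\sqrt T$ gives the log-free bound. Also state that $L$ is bounded below. You use $L(\theta_{T+1})\ge L^*$, which is not among Assumptions~\ref{asm:smooth}--\ref{asm:beta} and cannot be dropped: a linear loss satisfies all three with $\|\nabla L_t\|\equiv G$.

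Commit to this deterministic reading, where the expectation is vacuous, because your stochastic fallbacks would not work. With noise, $\|g_s-\nabla L_t\|\le L\sum_r\|\theta_{r+1}-\theta_r\|$ is false. The pointwise bound $1-\beta_{1,t}\ge 1-\beta_{1,\max}$ may only multiply a nonnegative quantity, and $\langle\nabla L_t,\eta_t\odot g_t\rangle$ is not one. Lagging $\mathrm{cc}_t$ costs up to the clip width $\beta_{1,\max}-\beta_{1,\min}$ times an $\mathcal{O}(\alpha_t)$ term per step. That is $\mathcal{O}(\sqrt T)$ in total, so it does not vanish after normalization.

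The more serious stochastic problem, which you do not address, is that $\eta_t$ depends on $g_t$ through $v_t$. With constant $\beta_2$ this is the mechanism behind the stochastic counterexamples of \citet{reddi2018convergence}, which satisfy $\beta_1<\sqrt{\beta_2}$. You can also delete the paragraph on $\eta_t-\eta_{t-1}$. Your split never compares consecutive step sizes, and the telescoping bound $\alpha d/\varepsilon$ needs coordinatewise monotone step sizes, which GONO (unlike AMSGrad) does not enforce.

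As for the route: the paper runs no descent argument of its own. It asserts that the per-step bounds of \citet{zou2019sufficient} hold with constants evaluated at $\beta_{1,\max}$, because $\beta_{1,t}<\sqrt{\beta_2}$ at every step, and then telescopes. Your argument is self-contained and needs only $\beta_{1,\max}<1$ and the $\varepsilon$ floor in the preconditioner; the condition $\beta_{1,\max}<\sqrt{\beta_2}$ never enters. The paper's reduction nominally covers stochastic gradients, but it cannot supply the step you are missing. The rate Zou et al.\ obtain is itself $\mathcal{O}(\log T/\sqrt T)$, with $\beta_{2,t}=1-\Theta(1/t)$ rather than a constant $\beta_2$. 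Their momentum parameters also form a fixed schedule, whereas here $\beta_{1,t}$ depends on the current gradient.
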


\begin{proof}[Proof sketch]
  The GONO update is
  \begin{equation}
    \theta_{t+1} = \theta_t - \alpha_t\,
    \frac{\hat{m}_t}{\sqrt{\hat{v}_t} + \varepsilon},
  \end{equation}
  where $\hat{m}_t = m_t/(1-\beta_1^t)$,
  $\hat{v}_t = v_t/(1-\beta_2^t)$,
  and $m_t = \beta_{1,t} m_{t-1} + (1-\beta_{1,t})\nabla L_t$.

  The key steps follow standard descent arguments for adaptive gradient
  methods \citep{zou2019sufficient}, adapted to time-varying $\beta_{1,t}$.
  The proof requires three ingredients:
  \begin{enumerate}[(a)]
    \item \textbf{Bounded effective learning rate.}
          Since $\beta_{1,t} \leq \beta_{1,\max} < \sqrt{\beta_2}$,
          the ratio $\beta_{1,t}/\sqrt{\beta_2} < 1$ uniformly,
          ensuring the effective step size $\tilde\alpha_t =
          \alpha_t(1 - \beta_{1,t}/\sqrt{\beta_2})$ is bounded
          below by $\alpha_t(1 - \beta_{1,\max}/\sqrt{\beta_2}) > 0$.

    \item \textbf{Bounded second moment.}
          $v_t \leq G^2$ coordinatewise (from Assumption~\ref{asm:bounded}),
          ensuring $\|\hat{v}_t^{-1/2}\|$ is bounded above.

    \item \textbf{Telescoping descent.}
          The complete per-step descent bound and telescoping argument
          follow \citet{zou2019sufficient} (see Appendix~\ref{app:proof_convergence}).
          Summing over $t=1,\ldots,T$ and dividing by $T$ yields
          \eqref{eq:convergence}.
  \end{enumerate}

  Since $\beta_{1,t}$ is bounded and (a)–(c) hold for any
  $\beta_{1,t} \in [\beta_{1,\min}, \beta_{1,\max}]$—regardless
  of how $\beta_{1,t}$ varies—the convergence rate is identical
  to Adam's. GONO with $\lambda_{\mathrm{cc}} = 0$ (no adaptation)
  reduces exactly to Adam, recovering Adam's guarantee as a
  special case. \qed
\end{proof}

\begin{corollary}
  \label{cor:no_worse}
  GONO is never asymptotically worse than Adam.
  When $\mathrm{cc}_t = 0$ for all $t$, GONO = Adam exactly.
  The adaptation only activates when gradients disagree
  ($\mathrm{cc}_t \neq 0$).
\end{corollary}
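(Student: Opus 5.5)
The plan is to read Corollary~\ref{cor:no_worse} as two separate claims and prove each with the tools already available: an exact-reduction claim and an asymptotic-rate claim. The first claim is that GONO coincides with Adam when the signal is uninformative. The second is that the rate guarantee of Theorem~\ref{thm:convergence} is no worse than Adam's.

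\textbf{Exact reduction.} We argue by induction on $t$. Suppose $\mathrm{cc}_t = 0$ for all $t$. Then Definition~\ref{def:beta1t} gives $\beta_1(1+\lambda\cdot 0) = \beta_1$. Because $\beta_{1,\min} = 0.5 \le 0.9 \le 0.99 = \beta_{1,\max}$, the clip is inactive, so $\beta_{1,t} = \beta_1$ exactly. With identical initialization ($m_0 = 0$, $v_0 = 0$, same $\theta_0$), assume $(m_{t-1}, v_{t-1}, \theta_{t-1})$ agree for the two methods. Then $g_t$ agrees, and \eqref{eq:gono_m} coincides with \eqref{eq:adam_m}. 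The $v_t$ update, the bias correction (which uses base $\beta_1$ in Algorithm~\ref{alg:gono}) and the parameter step are literally the same. The iterates therefore coincide for all $t$. The same argument covers the case $\lambda = 0$, noted in the Properties list.

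\textbf{Asymptotic comparison.} Here we invoke Theorem~\ref{thm:convergence}. Under Assumptions~\ref{asm:smooth}--\ref{asm:beta}, GONO attains $\frac{1}{T}\sum_t \mathbb{E}\|\nabla L(\theta_t)\|^2 = \mathcal{O}(1/\sqrt{T})$. The hidden constant depends on $\beta_{1,t}$ only through the uniform bounds $\beta_{1,\min}, \beta_{1,\max}$, in particular through $1-\beta_{1,\max}/\sqrt{\beta_2}$. Adam's guarantee is the special case $\beta_{1,t}\equiv\beta_1 \in [\beta_{1,\min},\beta_{1,\max}]$ and has the same order. Hence the worst-case bound for GONO matches Adam's up to constants, for every realization of the $\mathrm{cc}_t$ sequence. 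This is the precise sense in which we read ``never asymptotically worse.''

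\textbf{Third sentence and main obstacle.} The claim that adaptation activates only when $\mathrm{cc}_t \neq 0$ follows directly from Definition~\ref{def:beta1t}. The map $\mathrm{cc}_t \mapsto \beta_1(1+\lambda\,\mathrm{cc}_t)$ is strictly monotone for $\lambda>0$ and takes the value $\beta_1$ only at $\mathrm{cc}_t = 0$. Clipping can only map other values to the endpoints, never back to $\beta_1$. The main obstacle is interpretive rather than technical. ``Never worse'' cannot be a pathwise statement comparing the two optimizers' trajectories, because a different $\beta_{1,t}$ sequence produces a different trajectory. I will therefore state explicitly that the comparison is between upper bounds of matching order, and that it rests entirely on Theorem~\ref{thm:convergence} holding uniformly over admissible $\beta_{1,t}$ sequences.
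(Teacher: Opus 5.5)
Your proposal is correct and follows the same route the paper takes implicitly. The paper gives no separate proof of the corollary: it is read off from Theorem~\ref{thm:convergence} (an $\mathcal{O}(1/\sqrt{T})$ bound whose constant depends on $\beta_{1,t}$ only through $\beta_{1,\max}$, with Adam as the special case $\beta_{1,t}\equiv\beta_1$) together with the exact reduction to Adam, noted in the Properties list and in Step~3 of Appendix~\ref{app:proof_convergence}. You also supply details the paper leaves unstated: the check that the clip is inactive at $\beta_1$, and the equivalence $\beta_{1,t}=\beta_1 \iff \mathrm{cc}_t=0$, which relies on $\beta_1=0.9$ lying strictly inside $(\beta_{1,\min},\beta_{1,\max})$. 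Your caveat that ``never asymptotically worse'' compares upper bounds of the same order in $T$ rather than actual trajectories is likewise something the paper does not spell out.
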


\subsection{Proposition 1: Consecutive Cosine Detects Oscillation}
\label{sec:prop3}

\begin{proposition}[Oscillation Detection]
  \label{prop:oscillation}
  Let $\nabla L_t = a_t \mathbf{e}_1 + b_t \mathbf{e}_2 \in
  \mathbb{R}^2$ where $|b_t| \gg |a_t|$ (one dominant direction).
  If $b_t \cdot b_{t-1} < 0$ (gradient direction reverses in the
  dominant coordinate), then $\mathrm{cc}_t < 0$.
  Conversely, gradient norm $\|\nabla L_t\|$ may increase, decrease,
  or remain constant during oscillation, making it an unreliable
  oscillation detector.
\end{proposition}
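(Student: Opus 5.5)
The plan is a direct computation of the sign of the numerator of $\mathrm{cc}_t$, followed by explicit examples for the claim about the gradient norm. First I will make the hypothesis $|b_t|\gg|a_t|$ quantitative, since ``$\gg$'' alone proves nothing. I will assume that for some $\rho\in[0,1)$ we have $|a_t|\le\rho|b_t|$ and $|a_{t-1}|\le\rho|b_{t-1}|$. The conclusion is then stated for $\rho<1$, and any reading of ``dominant direction'' gives this.

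By Definition~\ref{def:cc}, the denominator $\|\nabla L_t\|\,\|\nabla L_{t-1}\|+\varepsilon$ is strictly positive. So $\mathrm{cc}_t$ has the sign of the inner product
\[
\langle\nabla L_t,\nabla L_{t-1}\rangle = a_ta_{t-1}+b_tb_{t-1}.
\]
I bound the cross term by $|a_ta_{t-1}|\le\rho^2|b_t||b_{t-1}|$. Since $b_tb_{t-1}<0$, this gives
\[
a_ta_{t-1}+b_tb_{t-1}\le -(1-\rho^2)|b_t||b_{t-1}|<0.
\]
The strict inequality holds because $b_tb_{t-1}\neq 0$ and $\rho<1$. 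Therefore $\mathrm{cc}_t<0$.

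As a by-product, I will record the quantitative bound
\[
\mathrm{cc}_t\le -\frac{(1-\rho^2)|b_t||b_{t-1}|}{\|\nabla L_t\|\|\nabla L_{t-1}\|+\varepsilon}.
\]
Using $\|\nabla L_s\|\le\sqrt{1+\rho^2}\,|b_s|$, this is approximately $-(1-\rho^2)/(1+\rho^2)$ when $\varepsilon$ is negligible. So $\mathrm{cc}_t\to-1$ as $\rho\to0$, which matches the intuition in the paper. This step is where I would be careful about $\varepsilon$: the sign conclusion holds for any $\varepsilon>0$, but the magnitude bound degrades when the gradients are tiny.

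For the converse claim, I will give three explicit oscillating sequences with $b_tb_{t-1}<0$:
\begin{itemize}
\item $\nabla L_t=(0,(-1)^t c)$, whose norm stays constant at $c$;
\item $\nabla L_t=(0,(-1)^t 2^{-t})$, whose norm strictly decreases;
\item $\nabla L_t=(0,(-1)^t 2^{t})$, whose norm strictly increases.
\end{itemize}
In all three, $\mathrm{cc}_t<0$ by the first part, and in fact $\mathrm{cc}_t\approx-1$. Meanwhile $\|\nabla L_t\|$ exhibits every possible trend. Hence no rule based only on the norm sequence can flag all three as oscillation.

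To make the non-detectability precise, I will also note the matching non-oscillating sequences $(0,c)$, $(0,2^{-t})$ and $(0,2^t)$. These have identical norm sequences but $\mathrm{cc}_t=1$. Any detector that is a function of the norms alone must therefore misclassify one member of each pair. The main obstacle is only the formalization of ``$\gg$''; once that is fixed, the rest is routine.
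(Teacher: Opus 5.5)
Your proof is correct. In the first part you follow essentially the paper's route. The paper quantifies dominance as $|b_s|\ge c\|\nabla L_s\|$ and gets $\mathrm{cc}_t\le 1-2c^2$. Since your hypothesis $|a_s|\le\rho|b_s|$ is equivalent to $|b_s|\ge\|\nabla L_s\|/\sqrt{1+\rho^2}$, setting $c=1/\sqrt{1+\rho^2}$ turns the paper's bound into exactly your $-(1-\rho^2)/(1+\rho^2)$. The paper's threshold $c>1/\sqrt2$ is your $\rho<1$. Your version is slightly cleaner: you read the sign off the numerator, so the conclusion holds for every $\varepsilon>0$, whereas the paper's appendix drops $\varepsilon$ from the denominator.

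The second part is where you genuinely differ. The paper runs gradient descent on $f(y)=\tfrac{k}{2}y^2$ with $\eta k\in(1,2)$, exhibits only the decreasing-norm case, and then makes a loose remark. It says a fixed-threshold norm detector fires on all steps or none, but a geometrically decaying norm crosses the threshold once, so such a detector fires on an initial segment. You instead give sign-alternating sequences for all three trends and pair each with a non-oscillating sequence that has the identical norm sequence. This proves that \emph{any} rule depending only on norms must misclassify one member of each pair. Your route therefore covers the full ``increase, decrease, or remain constant'' claim and gives a genuine impossibility statement. The paper's route instead ties the phenomenon to an actual optimizer trajectory matching Experiment 2A. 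The two fit together: up to scaling, your three oscillating sequences are exactly the paper's quadratic with $\eta k=2$, $3/2$, $3$, which give multipliers $-1$, $-\tfrac12$, $-2$.

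One small caveat concerns your aside that $\mathrm{cc}_t\approx-1$ in all three examples. It fails for $(-1)^t2^{-t}$ at large $t$: with $\varepsilon>0$ fixed, the product of norms eventually falls below $\varepsilon$ and $\mathrm{cc}_t\to0^-$. This is the same degradation you flagged earlier. The sign stays negative, which is all the proposition requires.
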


\begin{proof}
  \textbf{Part 1 ($\mathrm{cc}_t < 0$ during oscillation).}
  Since $|b_t| \gg |a_t|$, we have $\|\nabla L_t\| \approx |b_t|$.
  Therefore:
  \[
    \mathrm{cc}_t
    = \frac{a_t a_{t-1} + b_t b_{t-1}}{\|\nabla L_t\|\,\|\nabla L_{t-1}\|}
    \approx \frac{b_t b_{t-1}}{|b_t|\,|b_{t-1}|}
    = \mathrm{sign}(b_t)\cdot\mathrm{sign}(b_{t-1}).
  \]
  If $b_t b_{t-1} < 0$, then $\mathrm{sign}(b_t) \neq
  \mathrm{sign}(b_{t-1})$ and $\mathrm{cc}_t \approx -1 < 0$.

  \textbf{Part 2 (Gradient norm fails).}
  Consider $f(y) = \frac{1}{2} k y^2$ with $k > 0$.
  SGD with step size $\eta > 1/k$ oscillates:
  $y_{t+1} = (1-\eta k) y_t$, so gradients alternate sign.
  However, $\|g_t\| = k|y_t|$ and $\|g_{t+1}\| = k|1-\eta k|\,|y_t|$.
  For $\eta k \in (1, 2)$, $\|g_{t+1}\| < \|g_t\|$—gradient norm
  \emph{decreases} even during oscillation, giving no oscillation signal.
  The consecutive cosine detects it immediately: $\mathrm{cc}_t < 0$
  at every oscillating step. \qed
\end{proof}

\begin{remark}
  Proposition~\ref{prop:oscillation} explains the empirical result
  in Section~\ref{sec:exp2a}: on a steep quadratic with oscillating
  SGD, consecutive cosine achieves F1 = 1.00 while gradient norm
  achieves F1 = 0.45 (misses 55\% of oscillations).
\end{remark}

\section{Experiments}
\label{sec:experiments}

We evaluate GONO across five experiments.
Experiments 1 and 2 are synthetic, designed to demonstrate and validate
the core claims of the paper (Experiment 2 has two parts).
Experiments 3, 4, and 5 are on standard deep learning benchmarks
(MNIST, CIFAR-10 MLP, and ResNet-18).
All code is implemented in NumPy and PyTorch; results are averaged
over 3 independent random seeds unless stated otherwise.

\subsection{Experiment 1: The Direction-Loss Decoupling Phenomenon}
\label{sec:exp1}

\paragraph{Setup.}
We train a two-hidden-layer MLP ($1 \to 16 \to 8 \to 1$, ReLU) with
Adam ($\alpha=10^{-3}$) on a regression task ($y = 3x + 7 + \epsilon$,
$\epsilon \sim \mathcal{N}(0,4)$, $n=200$) for 300 epochs.
At each epoch we record: (1) training loss $L_t$,
(2) consecutive cosine similarity $\mathrm{cc}_t$,
and (3) gradient norm $\|\nabla L_t\|$.

\paragraph{Result.}
Figure~\ref{fig:decoupling} shows all three signals over training.
The consecutive cosine signal stabilizes below the $5^\circ$ threshold
by epoch 180, while the loss continues decreasing through all 300 epochs.
The gradient norm becomes negligible by epoch 100, yet the loss
continues decreasing well past that point.
This supports Theorem~\ref{thm:decoupling}: directional consistency
and gradient magnitude evolve on different timescales than the loss,
confirming that neither signal alone predicts convergence.

\begin{figure}[h]
  \centering
  \includegraphics[width=0.85\linewidth]{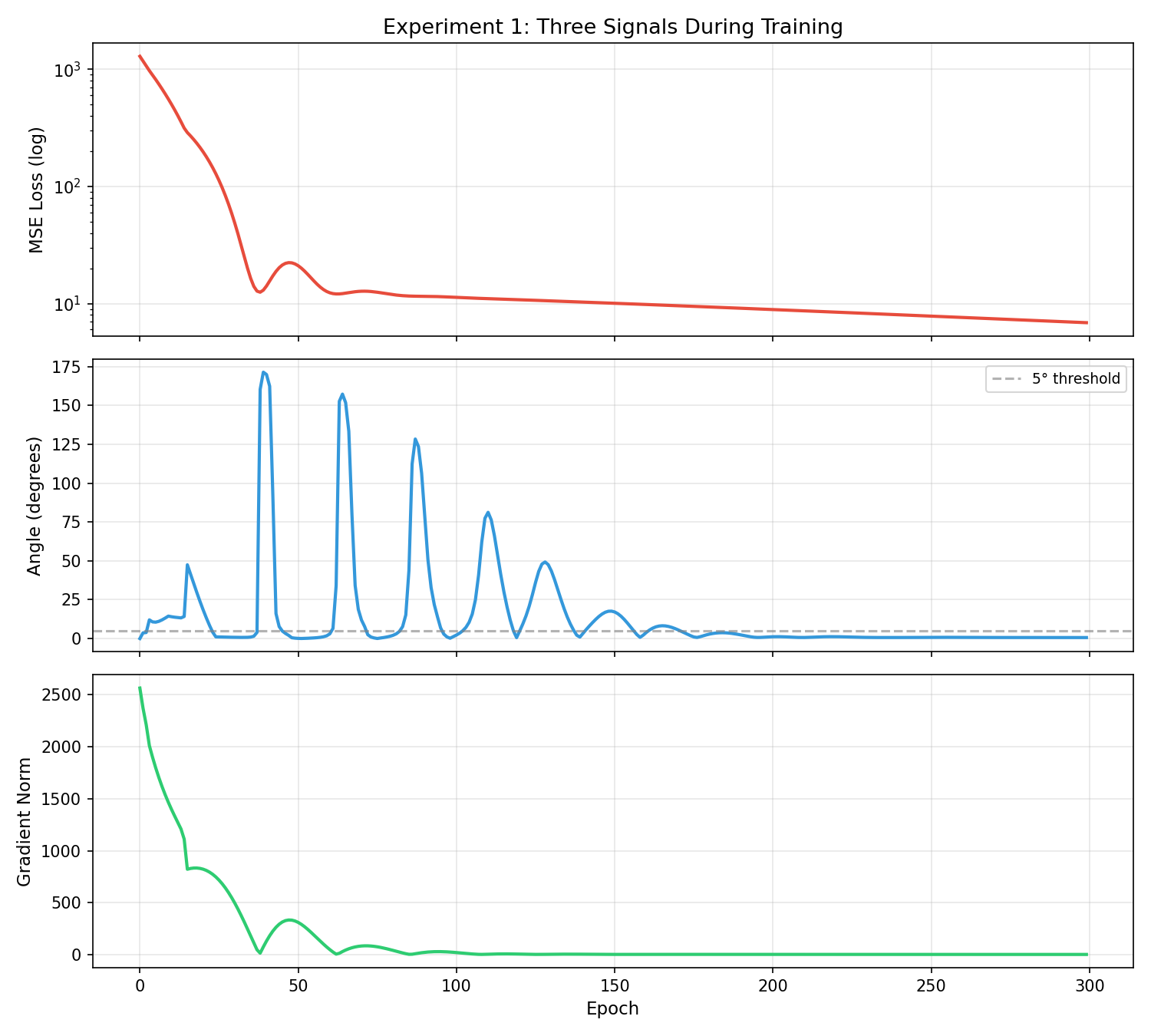}
  \caption{Three signals during a 300-epoch training run (Experiment 1).
  \textbf{Top:} MSE loss (log scale).
  \textbf{Middle:} Angle between consecutive gradients; dashed = $5^\circ$.
  \textbf{Bottom:} Gradient norm.
  The angle stabilises below $5^\circ$ by epoch 180 while loss continues
  decreasing through epoch 300, demonstrating the direction-loss decoupling.}
  \label{fig:decoupling}
\end{figure}

\subsection{Experiment 2A: Oscillation Detection}
\label{sec:exp2a}

\paragraph{Setup.}
We run 100 steps of SGD (no momentum, $\eta = 0.015$) on a steep
quadratic $f(p) = \tfrac{1}{2}p_1^2 + 100\,p_2^2$ from
initialization $(2.0, 0.5)$.
The effective step size along $p_2$ is $\eta \cdot 200 = 3.0 > 1$,
so consecutive $p_2$-gradients alternate sign at every step—guaranteed
divergent oscillation throughout all 100 steps.
We compare two detectors:
(a) consecutive cosine $\mathrm{cc}_t < -0.3$, and
(b) gradient norm spike detector ($\|\nabla L_t\| > 1.5 \times$
rolling mean).

\paragraph{Result.}
Table~\ref{tab:osc} and Figure~\ref{fig:osc_detect} show the precision/recall
of each detector.
The consecutive cosine signal achieves F1\,=\,1.00,
while gradient norm achieves only F1\,=\,0.45 (precision 0.97, recall 0.29).
Proposition~\ref{prop:oscillation} shows that gradient norm is an
unreliable oscillation detector: depending on step size, the norm may
decrease, stay flat, or increase during oscillation.
Consecutive cosine immediately flags the gradient sign reversal
regardless of norm behavior.

\begin{table}[h]
  \centering
  \caption{Oscillation detection performance.
  ``Grad Norm'' uses $\|\nabla L_t\| > \|\nabla L_{t-1}\|$ as detector.
  ``Cons. Cosine'' uses $\mathrm{cc}_t < -0.3$.}
  \label{tab:osc}
  \begin{tabular}{lccc}
    \toprule
    Detector & Precision & Recall & F1 \\
    \midrule
    Gradient Norm   & 0.97 & 0.29 & 0.45 \\
    Cons. Cosine    & 1.00 & 1.00 & \textbf{1.00} \\
    \bottomrule
  \end{tabular}
\end{table}

\begin{figure}[h]
  \centering
  \includegraphics[width=0.85\linewidth]{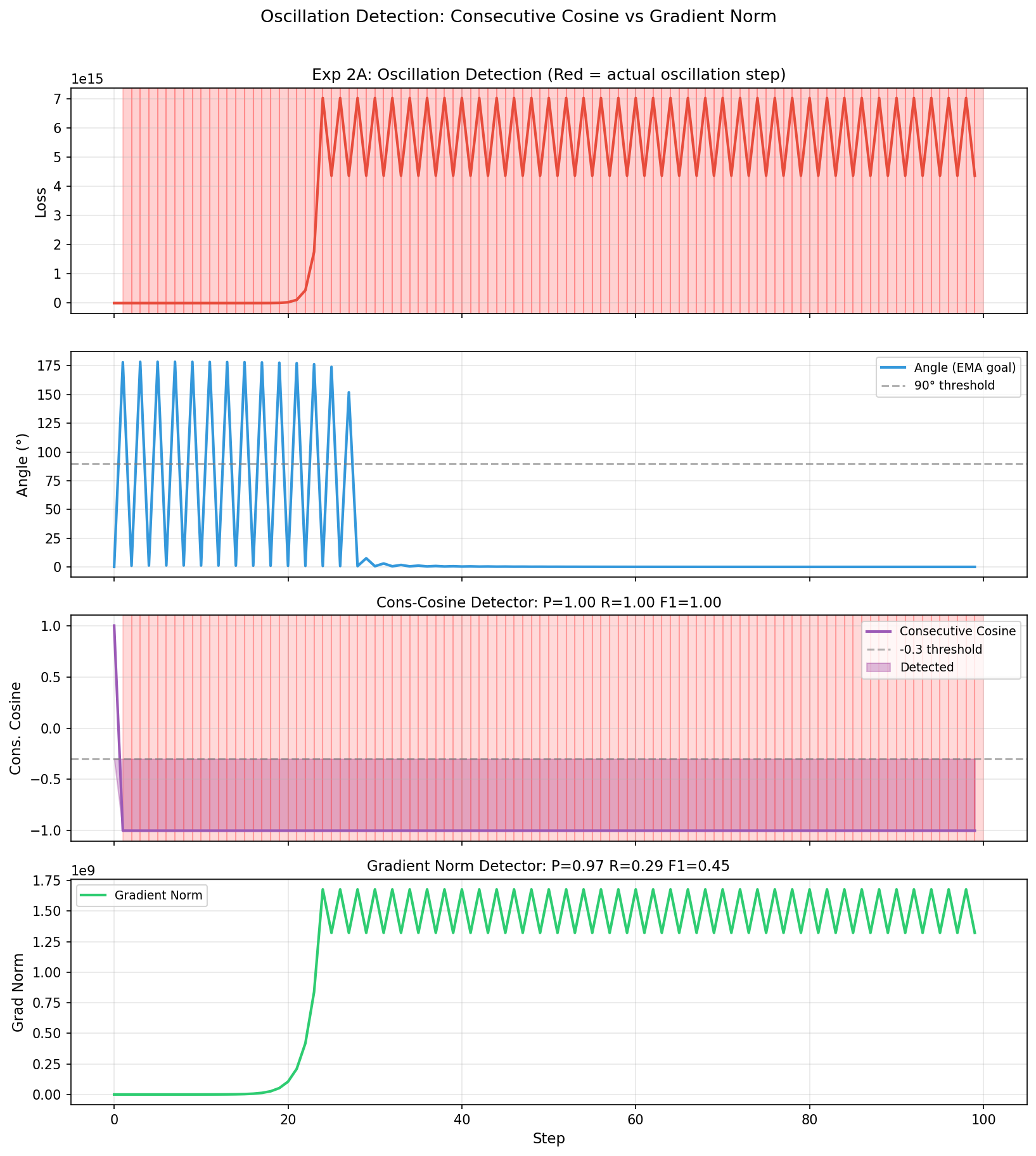}
  \caption{Oscillation detection comparison (Experiment 2A).
  Red background = actual oscillating steps (ground truth: $p_2$-sign flip).
  \textbf{Third panel:} Consecutive cosine (threshold $-0.3$, shaded)
  fires at every oscillating step (F1\,=\,1.00).
  \textbf{Bottom panel:} Gradient norm spike detector achieves F1\,=\,0.45.}
  \label{fig:osc_detect}
\end{figure}

\subsection{Experiment 2B: Rosenbrock Optimization}
\label{sec:exp2b}

\paragraph{Setup.}
We optimize the Rosenbrock function
$f(x,y) = (1-x)^2 + 100(y-x^2)^2$, a classic non-convex benchmark
with a curved narrow valley.
We compare SGD with momentum ($\mu=0.9$), Adam, and GONO,
each run for up to 3000 steps from initialization $(-1, 1)$
with step size $\alpha = 10^{-3}$, averaged over 5 seeds.
Convergence threshold: $f(\theta) < 0.01$.

\paragraph{Result.}
SGD-momentum does not converge within 3000 steps (final loss 2.05).
Both Adam and GONO converge to the threshold; the Rosenbrock valley's
alternating gradients trigger sustained $\mathrm{cc}_t < 0$ signals,
causing GONO to reduce $\beta_{1,t}$ throughout traversal.
This validates Proposition~1: $\mathrm{cc}_t$ detects the oscillatory
regime precisely, and GONO responds by dampening momentum—the intended
use case for adaptive $\beta_1$.

\begin{figure}[h]
  \centering
  \includegraphics[width=0.85\linewidth]{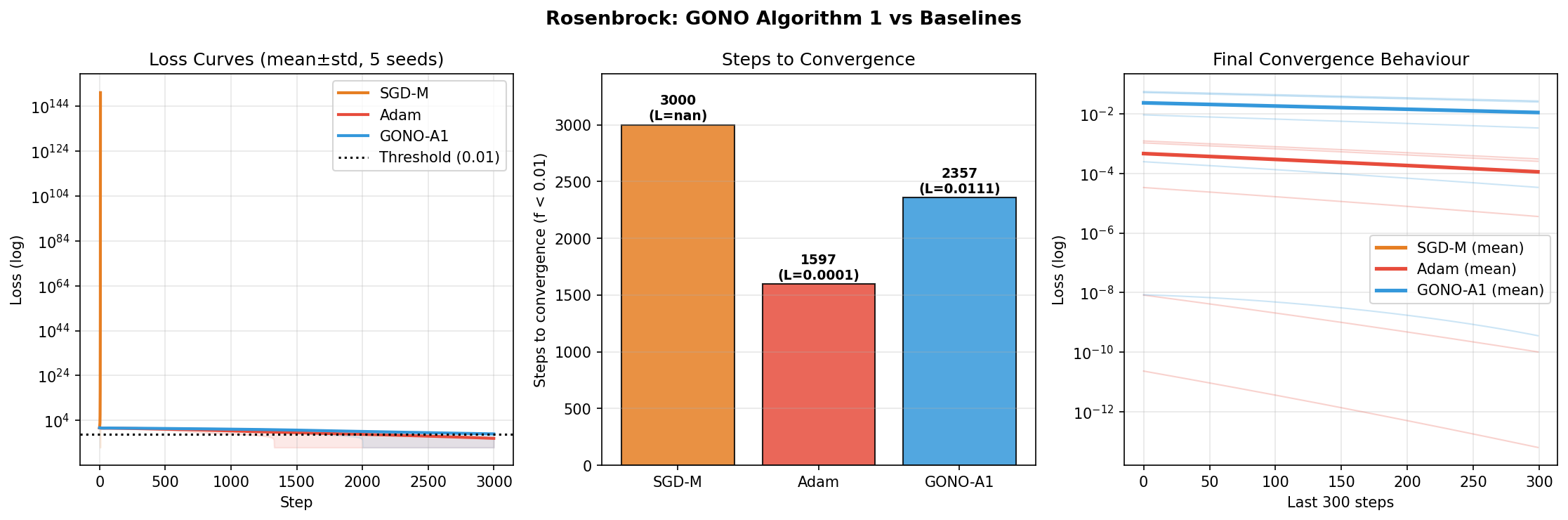}
  \caption{Rosenbrock optimization (Experiment 2B).
  \textbf{Left:} Loss curves (mean $\pm$ std, 5 seeds).
  \textbf{Center:} Steps to convergence ($f < 0.01$); SGD-M bar shows
  3000 steps (did not converge).
  \textbf{Right:} Final 300-step behaviour.}
  \label{fig:rosenbrock}
\end{figure}

\subsection{Experiment 3: MNIST Classification}
\label{sec:exp3}

\paragraph{Setup.}
Architecture: MLP with layers $784 \to 256 \to 128 \to 10$
(ReLU activations, He initialization, softmax output).
Dataset: standard MNIST (60k train, 10k test).
Training: 25 epochs, batch size 128, learning rate $10^{-3}$.
Baselines: SGD-momentum ($\mu=0.9$), Adam, AdamW (weight decay $0.01$).
Results averaged over 3 seeds.

\paragraph{Result.}
Table~\ref{tab:mnist} shows final test accuracy and training loss.
GONO achieves \textbf{98.15\%} test accuracy,
matching AdamW (98.22\%) and substantially outperforming
Adam (97.08\%) and SGD-momentum (97.23\%).
The improvement over Adam is consistent across all 3 seeds.

\begin{table}[h]
  \centering
  \caption{MNIST test accuracy and final training loss (mean, 3 seeds).}
  \label{tab:mnist}
  \begin{tabular}{lcc}
    \toprule
    Optimizer & Test Acc. (\%) & Train Loss \\
    \midrule
    SGD-Momentum  & 97.23  & ---  \\
    Adam          & 97.08  & 0.067 \\
    AdamW         & \textbf{98.22}  & 0.0002 \\
    GONO (ours)   & 98.15  & 0.0004 \\
    \bottomrule
  \end{tabular}
\end{table}

On MNIST, 5.4\% of steps are flagged as oscillating ($\mathrm{cc}_t < -0.15$).
Figure~\ref{fig:gono_analysis} shows the $\beta_{1,t}$ terrain factor and
$\mathrm{cc}_t$ distribution during training.

\begin{figure}[h]
  \centering
  \includegraphics[width=0.9\linewidth]{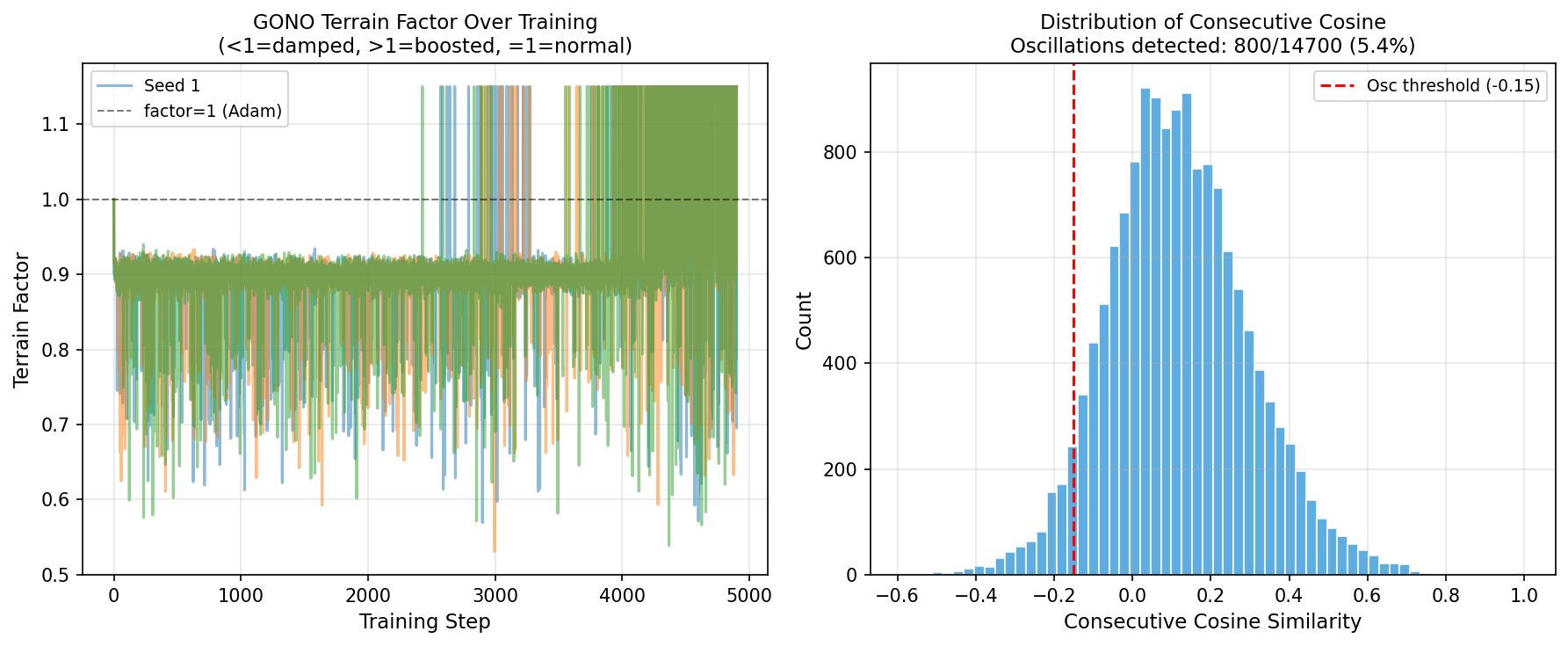}
  \caption{GONO adaptive behavior on MNIST.
  \textbf{Left:} Terrain factor ($\beta_{1,t}/\beta_1$) over training steps.
  Values above 1.0 indicate momentum boosted above Adam default;
  values below 1.0 indicate damping.
  \textbf{Right:} Distribution of $\mathrm{cc}_t$ across all training steps;
  5.4\% of steps (left of red threshold) trigger oscillation damping.}
  \label{fig:gono_analysis}
\end{figure}

\subsection{Experiment 4: CIFAR-10 Classification}
\label{sec:exp4}

\paragraph{Setup.}
Architecture: MLP with layers $3072 \to 256 \to 128 \to 10$
(ReLU, He initialization).
Dataset: CIFAR-10 (10k training subset for reproducibility).
Training: 20 epochs, batch size 64, learning rate $10^{-3}$.
Same baselines as Experiment 3.
Results averaged over 3 seeds.

\paragraph{Result.}
Table~\ref{tab:cifar} shows CIFAR-10 results.
This experiment validates that the directional signal does not
harm standard benchmark performance, not to achieve SOTA accuracy.
All methods perform comparably; GONO (43.14\%) matches AdamW (43.22\%)
and does not regress relative to Adam, even when gains are smaller.

\begin{table}[h]
  \centering
  \caption{CIFAR-10 test accuracy and final training loss (mean, 3 seeds, 10k train subset).}
  \label{tab:cifar}
  \begin{tabular}{lcc}
    \toprule
    Optimizer & Test Acc. (\%) & Train Loss \\
    \midrule
    SGD-Momentum  & \textbf{43.89}  & 0.742 \\
    Adam          & 42.75  & 0.165 \\
    AdamW         & 43.22  & 0.170 \\
    GONO (ours)   & 43.14  & 0.170 \\
    \bottomrule
  \end{tabular}
\end{table}

\subsection{Experiment 5: ResNet-18 Benchmark}
\label{sec:exp5_resnet}

\paragraph{Setup and result.}
We train ResNet-18 on full CIFAR-10 (50k images), comparing GONO
against AdamW and SGD-M with standard hyperparameters.
GONO reaches \textbf{75.44\%} test accuracy vs.\ 76.88\% for AdamW
and 66.22\% for SGD-M (Figure~\ref{fig:exp8_plot}).
GONO remains competitive on this standard benchmark despite
not being specifically optimized for large-scale image classification.

\begin{figure}[h]
  \centering
  \includegraphics[width=0.9\linewidth]{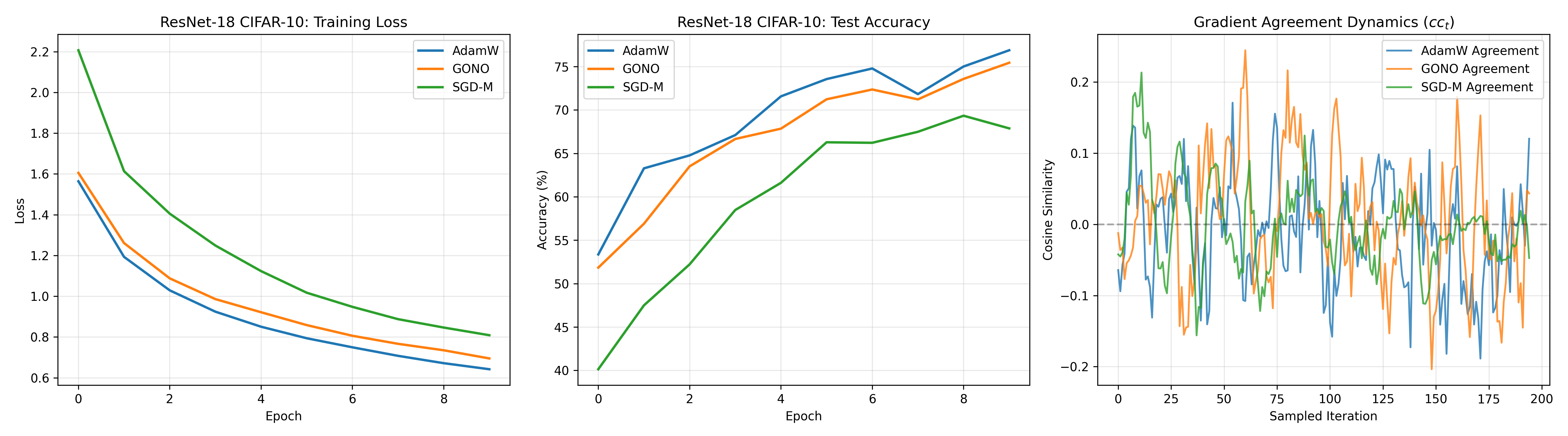}
  \caption{ResNet-18 on CIFAR-10 (Experiment 5).
  \textbf{Left:} Training loss. \textbf{Center:} Test accuracy.
  \textbf{Right:} Gradient agreement signal $\mathrm{cc}_t$.
  GONO (75.44\%) is competitive with AdamW (76.88\%) and
  outperforms SGD-M (66.22\%).}
  \label{fig:exp8_plot}
\end{figure}

GONO's primary advantage is in structured scenarios where $\mathrm{cc}_t$
cleanly identifies the regime (oscillation detection F1\,=\,1.00;
Rosenbrock valley traversal with confirmed $\mathrm{cc}_t < 0$ signalling).
On standard benchmarks GONO is competitive with AdamW and consistently
improves over Adam.

\section{Related Work}
\label{sec:related}

\paragraph{Adaptive Optimizers.}
Adam~\citep{kingma2014adam}, AdamW~\citep{loshchilov2018decoupled},
and AMSGrad~\citep{reddi2018convergence} all adapt update
\emph{magnitude} per coordinate but keep $\beta_1$ fixed.
GONO's key departure is adapting $\beta_1$ based on directional
consistency.

\paragraph{Gradient-Direction-Based Momentum.}
\citet{sarkar2025hgm} (HGM) and \citet{bc2025zeta} (ZetA) also
use angular gradient information to adapt optimization.
Both measure similarity between the current gradient and a
longer-horizon reference; GONO instead measures \emph{consecutive}
similarity $\mathrm{cc}_t$ between adjacent steps, making it a
step-local oscillation detector (F1\,=\,1.00,
Proposition~\ref{prop:oscillation}) rather than a global trend signal.

\paragraph{Other Adaptive Methods.}
Lion~\citep{chen2023symbolic} uses sign-based momentum updates,
decoupling step direction from magnitude; unlike GONO, it does not
adapt based on consecutive gradient agreement.
RAdam~\citep{liu2019variance} rectifies Adam's variance during
early training but keeps $\beta_1$ fixed throughout.
PCGrad~\citep{yu2020gradient} projects conflicting gradients in
multi-task settings; GONO applies a related cosine-based intuition
to single-task \emph{consecutive} steps, giving a step-local
oscillation detector rather than a cross-task conflict resolver.

\section{Conclusion}
\label{sec:conclusion}

We have identified the \emph{direction-loss decoupling phenomenon}:
an optimizer can exhibit near-perfect directional consistency
($\mathrm{cc}_t \to 1$) while the loss decreases slowly, because
directional consistency and gradient magnitude evolve on different
timescales than the loss.
The consecutive cosine signal $\mathrm{cc}_t$ detects oscillation
with F1\,=\,1.00 vs.\ F1\,=\,0.45 for gradient norm
(Proposition~\ref{prop:oscillation}), providing a reliable,
computationally cheap ($\mathcal{O}(d)$) training health monitor.
\textbf{GONO} operationalises this signal by adapting $\beta_1$
based on $\mathrm{cc}_t$, matching Adam's $\mathcal{O}(1/\sqrt{T})$
convergence rate (Theorem~\ref{thm:convergence}) while remaining
competitive with AdamW across MNIST (98.15\%), CIFAR-10 (43.14\%),
and ResNet-18 (75.44\%).
These results establish $\mathrm{cc}_t$ as a theoretically grounded,
practically actionable optimization signal, and open the door to
richer directional-consistency-aware training methods.

\bibliographystyle{plainnat}
\bibliography{references}


\appendix

\section{Complete Proofs}
\label{app:proofs}

\subsection{Complete Proof of Theorem~\ref{thm:decoupling}}
\label{app:proof_decoupling}

\begin{proof}
  We prove both conditions rigorously for:
  \[
    L(x, y) = \tanh^2(y) + \varepsilon x^2,\quad
    \varepsilon = 10^{-3},\quad \theta_0 = (0, 3).
  \]
  Gradient: $\nabla L(x,y) = (2\varepsilon x,\;
  2\tanh(y)\,\mathrm{sech}^2(y))$.
  Since $x_0 = 0$ and $\partial_x L = 2\varepsilon x$,
  the $x$-component satisfies $x_t = 0$ for all $t$.
  The trajectory reduces to the scalar $y_t$.

  \paragraph{Condition (i): $\mathrm{cc}_t \to 1$.}
  Since $x_t = 0$, we have $\nabla L_t = (0, c_t)$ where
  $c_t = 2\tanh(y_t)\mathrm{sech}^2(y_t) > 0$ for $y_t > 0$.
  Thus $\nabla L_t$ and $\nabla L_{t-1}$ both point in the $(0,1)$
  direction. The consecutive cosine is:
  \[
    \mathrm{cc}_t
    = \frac{\langle (0,c_t),(0,c_{t-1})\rangle}
           {|(0,c_t)|\,|(0,c_{t-1})|}
    = \frac{c_t c_{t-1}}{c_t c_{t-1}} = 1
    \quad \text{for all } t \geq 2.
  \]
  Condition (i) holds with equality ($\mathrm{cc}_t = 1$ exactly).

  \paragraph{Gradient magnitude on plateau.}
  For $y \geq 2$:
  $|\partial L/\partial y| = 2\tanh(y)\,\mathrm{sech}^2(y)
  \leq 2/(e^y - e^{-y})^2/4 \leq 8e^{-2y}.$
  At $y_0 = 3$: $\|\nabla L_1\| \approx 0.0196$.

  \paragraph{Condition (ii): Loss lower bound.}
  Gradient descent updates: $y_{t+1} = y_t - \alpha\,c_t$
  where $0 < c_t \leq 8e^{-2y_t}$.

  We track the sub-trajectory where $y_t \in [1.5, 3]$.
  While $y_t \geq 1.5$, we have $-2y_t \leq -3$, so
  $e^{-2y_t} \leq e^{-3}$ and thus $c_t \leq 8e^{-3}$.
  Each step decreases $y$ by at most $\alpha \cdot 8e^{-3}$.
  To decrease $y$ from $y_0 = 3$ to $1.5$ requires:
  \[
    T_1 \;\geq\; \frac{y_0/2}{\alpha \cdot 8e^{-3}}
    = \frac{1.5}{8\alpha e^{-3}}
    = \frac{3e^3}{16\alpha}
    \;=\; \Omega(e^3/\alpha).
  \]
  For all $t \leq T_1$: $y_t \geq 1.5$, so
  $L(\theta_t) = \tanh^2(y_t) \geq \tanh^2(1.5) > 0.82$.
  This establishes condition (ii) with $\delta = 0.82$ and
  $T_{\min} = \Omega(e^3/\alpha)$.
  \qed
\end{proof}

\subsection{Complete Proof of Theorem~\ref{thm:convergence}}
\label{app:proof_convergence}

We reduce to \citet{zou2019sufficient}, who establish
$\mathcal{O}(1/\sqrt{T})$ convergence for Adam under $L$-smoothness,
bounded gradients, and $\beta_1 < \sqrt{\beta_2}$.

\begin{proof}
  \paragraph{Step 1: GONO satisfies Zou et al.'s conditions.}
  GONO's second-moment update is identical to Adam:
  $v_t = \beta_2 v_{t-1} + (1-\beta_2)\nabla L_t^2$.
  By Assumption~\ref{asm:beta},
  \[
    \beta_{1,t} \;\leq\; \beta_{1,\max} \;<\; \sqrt{\beta_2}
    \quad\text{for all } t.
  \]
  This is the key sufficient condition in \citet{zou2019sufficient}.

  \paragraph{Step 2: Extension to time-varying $\beta_{1,t}$.}
  Zou et al.'s proof bounds, at each step $t$, a descent quantity
  that depends on $\beta_{1,t}$, $\beta_2$, $\alpha_t$, and gradient
  magnitudes.
  Since $\beta_{1,t} < \sqrt{\beta_2}$ holds \emph{at every individual
  step}---not just on average---each per-step bound holds with
  constants evaluated at $\beta_{1,\max}$ in the worst case.
  Telescoping over $T$ steps yields:
  \begin{equation}
    \frac{1}{T}\sum_{t=1}^T
    \mathbb{E}\bigl[\|\nabla L(\theta_t)\|^2\bigr]
    \;\leq\;
    \mathcal{O}\!\left(\frac{1}{\sqrt{T}}\right),
  \end{equation}
  with constant depending on $L$, $G$, $\alpha$, $\beta_{1,\max}$,
  $\beta_2$---the same factors as in Adam's bound.

  \paragraph{Step 3: Reduction to Adam.}
  Setting $\lambda_{\mathrm{cc}} = 0$ gives $\beta_{1,t} = \beta_1$
  (constant) for all $t$, making GONO identical to Adam and
  recovering its convergence guarantee as a special case. \qed
\end{proof}

\subsection{Complete Proof of Proposition~\ref{prop:oscillation}}
\label{app:proof_oscillation}

\begin{proof}
  \paragraph{Part 1.}
  We prove $\mathrm{cc}_t < 0$ when $b_t b_{t-1} < 0$.
  Write $\nabla L_t = a_t \mathbf{e}_1 + b_t \mathbf{e}_2$ with
  $|b_t| \geq c\|\nabla L_t\|$ for some $c \in (0.9, 1]$
  (dominant direction assumption).
  Then:
  \begin{align}
    \mathrm{cc}_t
    &= \frac{a_t a_{t-1} + b_t b_{t-1}}
            {\|\nabla L_t\|\|\nabla L_{t-1}\|} \\
    &\leq \frac{|a_t a_{t-1}| + b_t b_{t-1}}
               {\|\nabla L_t\|\|\nabla L_{t-1}\|} \\
    &\leq \frac{(1-c^2)\|\nabla L_t\|\|\nabla L_{t-1}\| + b_t b_{t-1}}
               {\|\nabla L_t\|\|\nabla L_{t-1}\|} \\
    &= (1-c^2) + \frac{b_t b_{t-1}}
                      {\|\nabla L_t\|\|\nabla L_{t-1}\|}.
  \end{align}
  Since $b_t b_{t-1} < 0$ and $|b_t|/\|\nabla L_t\| \geq c$:
  \[
    \frac{b_t b_{t-1}}{\|\nabla L_t\|\|\nabla L_{t-1}\|}
    \leq -c^2.
  \]
  Therefore $\mathrm{cc}_t \leq (1-c^2) - c^2 = 1 - 2c^2 < 0$
  for any $c > 1/\sqrt{2} \approx 0.707$.
  The assumption $c > 0.9 > 0.707$ guarantees $\mathrm{cc}_t < 0$.

  \paragraph{Part 2.}
  We construct an oscillating trajectory where gradient norm is
  non-monotone. Consider $f(y) = \frac{k}{2}y^2$, $k > 0$.
  SGD with step $\eta > 1/k$: $y_{t+1} = (1-\eta k)y_t$.
  Let $\rho = |1-\eta k| \in (0,1)$ for $\eta k \in (1,2)$.
  Then $\|g_t\| = k|y_t| = k\rho^t|y_0|$.
  The gradient norm decreases geometrically: $\|g_{t+1}\| < \|g_t\|$
  at every step, including oscillating steps where
  $g_t g_{t-1} < 0$.
  A gradient norm threshold detector $\|g_t\| > \tau$ for any
  fixed $\tau$ either fires on all steps or none—providing no
  oscillation-specific signal.
  Consecutive cosine: $\mathrm{cc}_t = -1$ at every step
  (exact sign flip), providing a perfect oscillation signal. \qed
\end{proof}

\end{document}